\theoremstyle{plain}
\newtheorem{example}{Example}
\newtheorem{proposition}{Proposition}
\theoremstyle{definition}
\newtheorem{definition}{Definition}
\newcommand{\cinput}{\ensuremath{\mathbf{x}}}
\newcommand{\vech}{\ensuremath{\mathbf{h}}}
\newcommand{\vecx}{\ensuremath{\mathbf{x}}}
\newcommand{\vecy}{\ensuremath{\mathbf{y}}}
\newcommand{\coutput}{\ensuremath{y}}
\newcommand{\classifier}{\ensuremath{\operatorname{C}}}
\newcommand{\dom}{\ensuremath{\mathcal{D}}}
\newcommand{\attributionM}{\ensuremath{s_{\classifier,l}}}
\newcommand{\attributionMgrad}{\ensuremath{g_{\classifier,l}}}
\newcommand{\FT}[1]{\textcolor{black}{#1}}
\begin{document}

\title{Towards a Theory of Faithfulness: Faithful Explanations
           of Differentiable Classifiers over Continuous Data}


\author{
   Nico Potyka\\
   Imperial College London \\
  \texttt{n.potyka@imperial.ac.uk}
  \and
   Xiang Yin\\
   Imperial College London \\
  \texttt{x.yin20@imperial.ac.uk}
  \and
   Francesca Toni\\
   Imperial College London \\
  \texttt{f.toni@imperial.ac.uk}
}
\date{}

\maketitle

\begin{abstract}
There is broad agreement in the literature that explanation methods 
should be \emph{faithful} to the model that they explain, but 
faithfulness remains a rather vague term. We revisit faithfulness 
in the context of continuous data and 
propose two formal definitions of faithfulness for feature attribution 
methods. \emph{Qualitative faithfulness} demands that scores reflect the 
true qualitative effect (positive vs. negative) of the feature on the model 
and \emph{quanitative faithfulness} that the magnitude of scores reflect
the true quantitative effect. We discuss under which conditions these
requirements can be satisfied to which extent (local vs global).
As an application of the conceptual idea, we look at differentiable
classifiers over continuous data and characterize Gradient-scores
as follows: every qualitatively faithful feature attribution method is
qualitatively equivalent to Gradient-scores. Furthermore, 
if an attribution method is quantitatively faithful in the sense that 
changes of the output of the classifier are proportional to the scores of
features, then it is either equivalent to gradient-scoring or it is based 
on an inferior approximation of the classifier.
To illustrate the practical relevance of the theory, we experimentally
demonstrate that popular attribution methods can fail to give faithful
explanations in the setting where the data is continuous and the 
classifier differentiable. 
\end{abstract}

\section{Introduction}

Automatic decision making is increasingly driven by black-box
machine learning models.
However, their opaqueness raises questions about
fairness, reliability and safety.
Explanation methods aim at making
the decision process transparent \cite{adadi2018peeking}.
There is some debate about what an explanation should or should
not look like \cite{lipton2018mythos}.
On the one hand, it should be easily comprehensible
to a lay
\FT{person}. On the other, it should be \emph{faithful} to the model
\FT{, by explaining} the true reasoning of the system.

Even though faithfulness is an important property, it seems
that there is no universally agreed definition of what it actually is.
Since explanations come in very different forms, it seems that 
a formal definition has to depend on the nature of the explanation method.
Counterfactual explanations, for example, explain how an input
would have to be changed to change the 
\FT{model's output} \cite{wachter2017counterfactual}.
Rule-based approaches like Anchors \FT{\cite{ribeiro2018anchors}} try to explain the behaviour 
of a system by human-readable rules
.
Our focus here will be on attribution methods that
assign a numerical score to every feature
\FT{, reflecting} its positive
or negative influence on the 
\FT{model's output}.
Popular examples include
LIME, SHAP, SILO and MAPLE \cite{ribeiro2016should,lundberg2017unified,bloniarz2016supervised,plumb2018model}.

Since many attribution methods
compute scores by approximating the prediction model locally by an 
interpretable model,
faithfulness is often considered as the accuracy of the
approximation in a neighborhood of the input \cite{ribeiro2016should,plumb2018model}.
While this is a natural and pragmatic idea, it is not without flaws.
For example, consider a dataset for credit scoring and assume that 
the score is highly correlated with the income. Furthermore, assume that
the majority of samples from a particular ethnic group have a low
income. Then a prediction model may have learnt that all people
from this group should have a low score. However, the approximation
model may actually approximate the prediction model well when
using only the income and ignoring the ethnicity. The explanation
will then suggest that the prediction model is unbiased, while it
actually is, and the explanation is clearly not faithful.
Let us note that this problem is only amplified when trying to
sample from the data distribution instead of considering samples
outside of the distribution (that may or may not be unrealistic in practice).

Therefore, it seems important to have formal requirements
of faithfulness that can be tested or even
formally proved for explanation methods. For attribution methods,
the following two desiderata seem natural:
\begin{description}
\item[Qualitative Faithfulness:] A feature with positive (negative) score,
should positively (negatively) influence the output of the prediction model.
This idea has been called \emph{Demand Monotonicity} in \cite{SundararajanN20}
\item[Quantitative Faithfulness:] The magnitude of a feature's score,
should reflect its impact on the output.
\end{description}
These intuitive ideas can be formalized in different ways in different settings.
We will focus on continuous data here and apply tools from Real Analysis
to make the intuition precise.
Let us already note that whether or not these definitions can be satisfied
depend on the nature of the prediction model. For example, a linear prediction
model behaves uniformly on its domain, so that one can assign a globally
qualitatively and quantitatively faithful score to a feature. However, non-linear prediction models can change their qualitative as well as
quantitative behaviour in different regions, so that we can expect 
faithfulness only locally. 

As an application of the theory, we study properties of gradient-based
feature attribution that has been proposed in \cite{baehrens2010explain}.
The idea is to define the score of a feature as the partial derivative
of the prediction model with respect to the feature.
We will refer to this attribution method as \emph{Gradient-scoring} in
the following.
Gradient-scoring lost popularity in recent years due to its noisy behaviour in 
natural language and computer vision tasks \cite{heo2019fooling,ghorbani2019interpretation,wang2020gradient}.
However, we argue that the problems observed in these domains are not
due to inherent limitations of the gradient, but due to the discrete 
nature of the data (e.g. word occurence, integer pixel values). 

For qualitative faithfulness, we show that Gradient-scoring
gives the strongest guarantees that we can hope for (Propositions \ref{prop_grad_scor_faithful} and \ref{prop_mon_grad_global_faithful}) 
and that every other method with similar guarantees must be qualitatively equivalent to Gradient-scoring (Proposition \ref{prop_qual_faithfulness_and_grad}).
Our conceptualization of quantitative faithfulness is somewhat ambiguous
in the sense that the \emph{impact of a feature on the output} can be
quantified in different ways. We choose to demand that feature scores 
should be proportional to the change of the output when the input feature
is changed. We show that every attribution method that satisfies this property
is either equivalent to Gradient-scoring or is based on an inferior 
approximation of the prediction model (Proposition \ref{thm_quant_faithfulness_characterization} and \ref{prop_quant_faithfulness_convergence_rate}). 
Our interpretation of these results is that Gradient-scoring should be 
the method of choice in settings where the data is continuous and 
the prediction model differentiable.

To support this claim empirically, we investigate in Section \ref{sec_experiments} to which extent different methods can capture the 
true behaviour of logistic regression (known ground-truth) and 
compare gradient scores to other scores on classification problems with continuous features when using multilayer perceptrons (unknown ground-truth).
Our logistic regression experiments show, in particular, that LIME, 
SHAP and SILO can fail to give faithful explanations. However, overall,
LIME seems to be very robust. Given LIME's close conceptual relationship 
to the gradient
(it locally approximates the prediction model by an interpretable model),
it seems to be a natural extrapolation of Gradient-scoring to domains with
discrete features or non-differentiable prediction models.

\section{Related Work}

In the literature about attribution methods, several properties have been 
proposed to compare different approaches. 
\cite{lundberg2017unified} defined three properties
for local explanations (explanations of an output with respect to a given input) that characterize Shapley Additive Explanations. Intuitively,
they demand the following. 
\begin{enumerate}
\item \emph{Local Accuracy:} the output of a model
at an input, can be described by an affine function of the
input that is determined by the feature scores. 
\item \emph{Missingness:} the scores of features that are not present 
at an input are $0$.
\item \emph{Consistency:} If the impact of a feature on prediction model 
$M_1$
is always larger than the impact on model $M_2$, then the feature
score for $M_1$ should be larger than the one for $M_2$.
\end{enumerate}
While the properties and the characterization are interesting,
computing Shapley Additive Explanations exactly is often impractical
(because there are too many combinations) or even impossible (because there 
is an infinite number of combinations as soon as one feature domain is infinite). While various approximations have been proposed, it is not clear
to which extent the desirable properties can actually be maintained in practice. Our logistic regression experiments in Section \ref{sec_experiments}
indicate that Shapley scores can be highly misleading when being applied
to continuous domains.

\cite{sundararajan2017axiomatic} proposed some properties that 
characterize \emph{Integrated Gradients}, a variant of Gradient-scoring
that averages the gradient over a region instead of taking the gradient
at a single point. Most relevant for our work, the \emph{Sensitivity} property demands that a feature that can change the output,
will receive a non-zero score. Gradient-scores can violate
this property for non-linear prediction models because the gradient can
be $0$ at points where the qualitative impact of a feature changes from
positive to negative or vice versa. There are arguments for and against
this behaviour. One may argue that $0$ is misleading because the feature
can have an impact in a region close to the input. 
However, let us note that a smooth prediction model over continuous
features will also have small derivatives at points close to the input.
Therefore, the property seems more important for discrete domains or
non-smooth prediction models. One may also argue that, if the prediction
model changes its behaviour, averaging the impact can also be misleading
because it suggests monotonicity, where no monotonicity exists. Of course,
this can be said about attribution methods in general, as they assign
scores to features independently without considering their joint effects.
Let us note that the \emph{Dummy} property \cite{SundararajanN20} complements
\emph{Sensitivity} by demanding that a feature that cannot change the output,
will receive a zero score. This property is also satisfied by Gradient-scores
and is related to our definition of \emph{Strong Qualitative Faithfulness} that we will discuss in Section \ref{sec_qualitative_faithfulness}.

In the context of attribution methods, 
\emph{faithfulness} is often understood
as the accuracy of a local approximation when predicting the 
output of a prediction model
\cite{sanchez2015towards,ribeiro2016should,plumb2018model}.
As we already discussed in the introduction, this can be misleading
because a high accuracy does not guarantee that the explainer picked
up the actual behaviour of the prediction model.
Another problem with this definition is that it 
depends on the definition of a neighbourhood and a sampling strategy 
from which the test set is generated. 

Recent attribution methods like LIME and MAPLE  \cite{ribeiro2016should,plumb2018model}
are partially based on the idea of approximating a black-box model
locally by a linear model in order to use the coefficients of the
linear models as feature scores. Let us note that in a setting with
continuous features, the gradient is actually an analytic linear
approximation of the classifier, providing that the classifier is differentiable. 
While there are good reasons to replace the gradient when the
classifier is non-differentiable or features are discrete, 
it seems wasteful not to use it for differentiable classifiers 
over continuous features.
In the latter setting, alternative attribution methods may be just a poor substitute
for the gradient
\FT{, not only giving} inaccurate explanations, but 
also
unnecessarily difficult to compute.
For example, \cite{garreau2020explaining} 
showed that when the prediction model to be explained by LIME is linear, 
the expected coefficients of the approximating linear model are proportional 
to the partial derivatives of the prediction model.
Since the partial derivatives exactly capture the behaviour of a linear
function, this shows that LIME is to some extent faithful to linear models.
However, the authors also found that the expected error of the linear approximation is bounded away from zero. Furthermore, as the approximation
has to be computed based on perturbations of the input, the scores are noisy
(as they depend on the sampled neighbours) and, compared to the gradient, relatively expensive to compute. More recently, \cite{Agarwal21} showed
that some natural configurations of LIME converge to the same scores in expectation as a smoothed
version of the gradient. While this can be desirable in the discrete
setting, the original gradient seems to be a more accurate and more efficient
explanation of the classifier's true behaviour in the continuous setting.

Since Gradient-scoring has been proposed as an explanation method in \cite{baehrens2010explain},
many authors noted lack of robustness in the sense that the scores 
assigned to features can change significantly when moving to 
"neighbouring" points \cite{heo2019fooling,ghorbani2019interpretation,wang2020gradient}.
In order to improve robustness, it has been suggested to smoothen
the gradient, for example, by integrating gradients from a reference point \cite{sundararajan2017axiomatic} or by averaging the gradients at neighbouring
points \cite{ancona2018towards}. However, it seems that the
robustness problems have been mainly observed in settings with discrete data,
where neighbours are generated by masking discrete features like words or
pixels. It is well known from the literature on adversarial machine learning
that even conceptually continuous models like neural networks can behave
discontinuously in these settings \cite{szegedy2014intriguing}. 
Therefore, it is not surprising that the gradients change significantly
when taking discrete steps. We should actually demand that a faithful
explanation method reflects this discontinuity if it is supposed to explain
what the model does and not just what the user expects it to do.
Only when the prediction model itself is 'robust'
should we demand that the scores are 'robust'.




   
\section{Preliminaries}

The abstract goal of classification is to map 
\FT{inputs} $\cinput$ to outputs $\coutput$.
We think of the inputs as feature 
\FT{vectors}
$\cinput = (x_1, \dots, x_k)$, where the i-th value is taken from some continuous 
domain $D_i \subseteq \mathbb{R}$.
We let $\dom = \bigtimes_{i=1}^k D_i$ denote the cartesian product of the individual
domains.
Given an input $\cinput = (x_1, \dots, x_k)$,
with a slight abuse of notation, we let
$(\cinput_{-i},x) = (x_1, \dots, x_{i-1}, x, x_{i+1}, \dots, x_k)$
denote the input where the i-th component has been replaced
with $x$. 
The output $\coutput$ is taken from a finite set  $L$ of class labels.
A \emph{classification problem} $P = ((D_1, \dots, D_k), L, E)$ consists of the domains, the class labels and a set of training examples $E = \{(\cinput_i, \coutput_i) \mid 1 \leq i \leq N, \cinput_i \in \dom, \coutput_i \in L\}$. 
The examples are used to train a classifier. We will
not be concerned with training 
\FT{and} assume that a classifier
is given. 

A \emph{probabilistic classifier} is a function 
$\classifier: \dom \times L \rightarrow [0,1]$
that assigns a probability  $\classifier(\cinput, \coutput)$ to every pair $(\cinput, \coutput)$ 
such that $\sum_{l \in L} \classifier(\cinput, l) = 1$. 
$\classifier(\cinput, \coutput) \in [0,1]$ can be understood as the confidence of the classifier that an example with features $\cinput$ belongs to 
the class $\coutput$. 
A classification decision can be made,
for example, by picking the label with the highest probability
or by defining a threshold value for the probability.
To simplify notation, we will often write $\classifier_{\coutput}(\cinput)$ instead of $\classifier(\cinput, \coutput)$ in the 
\FT{remainder}.

As we point out later, some strong notions of faithfulness
can only be satisfied by \emph{monotonic} classifiers.
We call a classifier \emph{monotonically increasing (resp. decreasing) wrt. the label $y$ and the $i$-th feature}
iff for all inputs $\cinput \in \dom$,
and $x_i' \in D_i$, $x_i < x_i'$ implies that
$\classifier_y(\cinput) \leq \classifier_y((\cinput_{-i},x_i'))$
(resp. $\classifier_y(\cinput) \geq \classifier_y((\cinput_{-i},x))$)
and 
$x_i > x_i'$ implies that
$\classifier_y(\cinput) \geq \classifier_y((\cinput_{-i},x_i'))$
(resp. $\classifier_y(\cinput) \leq \classifier_y((\cinput_{-i},x))$).
If 
\FT{these constraints hold with strict inequality}, we call the classifier 
\emph{strictly monotonically increasing (resp. decreasing) wrt. \FT{$y$ and} the $i$-th feature}.
We call a classifier \emph{(strictly) monotonic} if for every label and for every feature, the classifier is (strictly) monotonically increasing or (strictly) monotonically decreasing wrt. \FT{the label and} the feature. 


Given a classification problem $P = ((D_1, \dots, D_k), L, E)$,
an input $\cinput \in\dom$,
a class label $l \in L$
and a probabilistic classifier $\classifier$ for the problem,
an \emph{attribution method} is a function
$\attributionM: \dom \rightarrow \mathbb{R}^k$
that assigns a scoring vector to every input of the classifier.
Intuitively, the i-th component $\attributionM(\cinput)_i$ 
should \FT{``}reflect\FT{''} the influence of the
i-th feature on the probability of $l$ under 
$\classifier$ at the input $\cinput$.
If this is the case, we 
may deem the attribution method to be
\emph{faithful to the classifier}.
However, while intuitive, this notion is rather vague.
To obtain a formally testable criterion for faithfulness, we 
need a formal
definition that can be checked for different attribution methods.

In the remainder, we will assume as given some probabilistic classifier $\classifier$
for some classification problem $P$.

\section{Qualitative Faithfulness }
\label{sec_qualitative_faithfulness}

\subsection{Local and Strong Faithfulness}

A first desirable notion of an attribution method 
is \emph{qualitative faithfulness}:
if the score $\attributionM(\cinput)_i$ of the i-th feature is positive/negative, 
then the classifier should be monotonically increasing/ decreasing
wrt. the feature. 
However, many classifiers
are only locally monotonic, so that a simple score cannot represent the
actual global behaviour of the model.
We therefore distinguish between \emph{local} and \emph{global} faithfulness.
\begin{definition}[Qualitative Faithfulness]
\label{def_qual_faithful}
An attribution method $\attributionM: {1,\dots, k} \rightarrow \mathbb{R}$ is called \emph{locally qualitatively faithful to $\classifier$}
if 
for all inputs $\cinput \in \dom$ and 
for all labels $l \in L$
there exists 
$\epsilon > 0$ such that
for all 
$i=1,\dots,k$ and
for all 
$x \in D_i\setminus \{x_i\}$
such that $|x - x_i| < \epsilon$,
we have 
\begin{enumerate}
    \item if $\attributionM(\cinput)_i >0$, then 
    $\classifier_l(\cinput) < \classifier_l(\cinput_{-i},x)$
     if $x>x_i$ and
     $\classifier_l(\cinput) > \classifier_l(\cinput_{-i},x)$
     if $x<x_i$\FT{;} and
     \item if $\attributionM(\cinput)_i <0$, then
     $\classifier_l(\cinput) > \classifier_l(\cinput_{-i},x)$
     if $x>x_i$ and
      $\classifier_l(\cinput) < \classifier_l(\cinput_{-i},x)$
     if $x<x_i$.
\end{enumerate}
If  
$\attributionM: {1,\dots, k} \rightarrow \mathbb{R}$ is 
locally qualitatively faithful to $\classifier$ for all
inputs $\cinput$ and all $\epsilon >0$,
we call it \emph{(globally) qualitatively faithful to $\classifier$}.
\end{definition}
Intuitively, local qualitative faithfulness demands that the scores
reflect the true qualitative influence of a feature in a region
close to the given input. That is, when the score is positive (negative), then
the probability should increase (decrease) when increasing  the feature.
If this requirement holds on the whole domain, the attribution method
is called globally qualitatively faithful.

One may consider a stronger notion of faithfulness that also demands that the 
classifier ignores the attribute when the score is $0$. Let us note that
this definition is only meaningful when we assume that 
\FT{the classifier is}
increasing, decreasing or ignorant 
\FT{wrt.} every feature.
For non-monotonic classifiers that can change their 
behaviour from increasing to decreasing, this requirement
is impossible to satisfy.
However, as the property is desirable for monotonic classifiers, 
we also consider it here and call it \emph{strong} faithfulness.
\begin{definition}[Strong Qualitative Faithfulness]
An  attribution method $\attributionM: {1,\dots, k} \rightarrow \mathbb{R}$ is called \emph{strongly (locally) qualitatively faithful to $\classifier$}
if 
1) it is (locally) qualitative faithful and
2) for all inputs $\cinput \in \dom$ and 
for all labels $l \in L$
there exists 
$\epsilon > 0$ such that
for all 
$i=1,\dots,k$ and
for all 
$x \in D_i\setminus \{x_i\}$
such that $|x - x_i| < \epsilon$,
we have
$\classifier_l(\cinput) = \classifier_l(\cinput_{-i},x)$
whenever $\attributionM(\cinput)_i =0$.\\
If  
$\attributionM: {1,\dots, k} \rightarrow \mathbb{R}$ is 
locally strongly qualitatively faithful to $\classifier$ for all
inputs $\cinput$ and all $\epsilon >0$,
we call it \emph{(globally) strongly qualitatively faithful to $\classifier$}.
\end{definition}
Intuitively, we call an attribution method locally (strongly) qualitatively
faithful if the scores represent the true qualitative effect
(positive/(neutral)/negative) in an $\epsilon$-environment of 
input\FT{s}. Note that we allow that the size of the environment
depends on the input.
A globally qualitatively faithful attribution method
computes scores that 
represent the true effect over the whole domain.
However, as we illustrate next, 
some of these desiderata cannot be satisfied 
if the classifier 
\FT{to be explained} is non-monotonic.

\subsection{Feasibility of Faithfulness}

Qualitative faithfulness is a natural property for attribution methods
\FT{, but} to which extent can 
\FT{it} be satisfied at all?
\FT{We explore this question in 
the next example.} 
\begin{example}
\label{exp_nonlinear_classifier}
Consider a simple binary classification
problem over a single feature with domain $D = \mathbb{R}$ 
\FT{where, intuitively,} an 
\FT{input} should be
classified as positive if the value of the feature is
``sufficiently far away from $0$". 
A typical example is anomaly detection, where
\FT{the feature} corresponds to the deviation of an observation from the mean or median
and the decision
threshold is
based on the variance or 
interquartile range.
Consider the classifier $\classifier_1(x) = \phi_l(x^2)$,
where $\phi_l(z) = \frac{1}{1+ \exp(-z)}$
is the logistic function.
We could classify an input $x$ as positive if
$\classifier_1(x) > 0.7$.
Figure \ref{fig:fig_non_mon_exp} shows the graphs of the functions $\phi_l(x)$ and $\classifier_1(x)$.
Intuitively, $\phi_l(
\FT{x})$ squashes its input between $0$ and $1$. 
While $\phi_l$ is 
monotonically increasing,
our classifier $\classifier_1(x) = \phi_l(x^2)$ is monotonically
decreasing for $x<0$ and monotonically increasing for $x>0$.
In particular, at $x=0$, it is decreasing in one direction and increasing in
the other.
Now consider the input $x=-0.5$. We have 
$\classifier_1(-0.5) = \phi_l(0.25) \approx 0.56$ \FT{(red dot in Figure \ref{fig:fig_non_mon_exp})}.
When we increase $x$ to $0$, we have 
$\classifier_1(0) = \phi_l(0) = 0
\FT{.}5$ \FT{(green dot)}.
Since the probability decreased, the score under a strongly faithful attribution method must be negative.
However, as we increase $x$ further to 
$1$, we have 
$\classifier_1(1) = \phi_l(1) \approx 0.73$ \FT{(purple dot)}.
Since the probability increased, the score under a strongly 
faithful attribution method must also be positive.
This is clearly impossible and, therefore, there can be no
\emph{globally qualitatively faithful} attribution method for $\classifier_1$.
Also note that an attribution method can only be
\emph{qualitatively faithful} in an environment that depends on the input.
To see this, consider an arbitrary $\epsilon > 0$
and the input $-\frac{\epsilon}{2}$.
Then we have $\classifier_1(-\frac{\epsilon}{2}) > \classifier_1(-\frac{\epsilon}{4})$, but
$\classifier_1(-\frac{\epsilon}{2}) < \classifier_1(\epsilon)$.
Intuitively, the closer the input is to $0$, the smaller is the
environment in which an attribution method can be
qualitatively faithful.
Finally, note that there can be no \emph{strongly qualitative\FT{ly faithful}} explanation
for $\classifier_1$ at $0$ because $\classifier_1$ is neither increasing, nor decreasing,
nor does it ignore the input at this point. 
\end{example}
\begin{figure}[tb]
	\centering
		\includegraphics[width=\columnwidth]{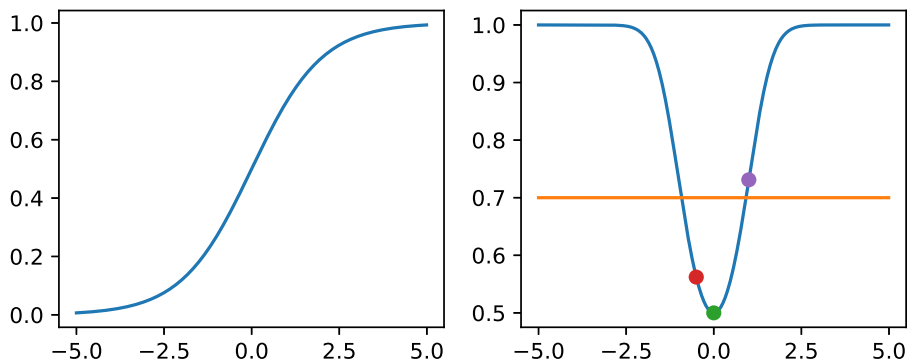}
	\caption{Logistic function and non-monotonic classifier \FT{(for Example~\ref{exp_nonlinear_classifier})}.}
	\label{fig:fig_non_mon_exp}
\end{figure}
As the previous example illustrates, the positive/negative effect of a 
feature can only be determined locally in an environment that depends on the
input. 
In our previous example,
the closer the negative input is to $0$, the smaller is the environment in which we can expect scores that are
qualitatively faithful to the model. 
This is because the effect switches from
negative to positive at $0$.
Such a behaviour \FT{may emerge} in many domains. For example,
when predicting health risks, the probability often increases
when particular health markers deviate substantially from a 
default value 
\FT{ (for} instance, both underweight and overweight 
and both low and high blood pressure could be seen as red flags\FT{)}.
\FT{Furthermore, }
multiple changes in the behaviour of a classifier can naturally occur 
for geographic features
\FT{, e.g.} latitude and longitude in real estate datasets\FT{: here, when predicting property demand,} assuming monotonic
behaviour is unrealistic 
because 
\FT{the popularity of neighbouring areas may be unrelated}.

\subsection{Faithfulness and Gradients}

We will now argue that faithfulness is closely related to gradients
when we try to explain differentiable classifiers over continuous data. 
Let us first note that the local behaviour of a function can be precisely
captured by its partial derivatives. Since many probabilistic classifiers like logistic regression and many neural networks correspond
to differentiable functions, it seems worthwhile to work towards a
characterization of faithfulness in terms of differentiability. 
Following \cite{nocedal2006numerical},
we call a function $f: D \rightarrow \mathbb{R}$ with domain
$D \subseteq \mathbb{R}^n$ 
differentiable at $\vecx \in D$ iff
there is a vector $g \in \mathbb{R}^n$ such that
$\lim_{\vecy \rightarrow 0} \frac{f(\vecx+\vecy) - f(\vecx) - g^{\top} \vecy}{\|\vecy \|}=0$.
Intuitively, this means that $f(\vecx+\vecy) \approx f(\vecx) +  g^{\top} \vecy$, 
that is, $f$ can be approximated locally by a linear function. 
The vector $g$ is called the gradient of $f$ at $\vecx$ and denoted by 
$\nabla f(\vecx)$. Its i-th component $g_i$ is denoted by
$\frac{\partial f}{\partial x_i}(\vecx)$ and is called the $i$-th partial derivative
of $f$ at $\vecx$. 
Intuitively, $\frac{\partial f}{\partial x_i}(\vecx)$ measures
the growth of $f$ at $\vecx$ along the i-th dimension.
In the following proofs, we will often use the fact that the partial derivative of 
a function $f$ can be written as $\frac{\partial f}{\partial x_i}(\vecx) =
 \lim_{h \rightarrow 0} \frac{f(\vecx + h e_i) -f(\vecx)}{h}$,
 where $e_i$ denotes the i-th unit vector \cite{nocedal2006numerical}.

We first note that the partial derivatives of features give us
a locally qualitative faithful attribution method for every
differentiable probabilistic classifier.
We call the corresponding attribution method \emph{Gradient-scoring}.
\begin{definition}[Gradient-scoring]
The attribution method
$\attributionMgrad$ 
defined by $\attributionMgrad(\cinput) = \nabla \classifier_l(\cinput)$
is called \emph{Gradient-scoring}.
\end{definition}
\begin{proposition}
\label{prop_grad_scor_faithful}
If $\classifier$ is \emph{differentiable},
then Gradient-scoring  is 
locally qualitatively faithful to $\classifier$.
\end{proposition}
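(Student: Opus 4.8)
The plan is to reduce local qualitative faithfulness directly to the limit characterisation of the partial derivative recalled just above the statement, namely $\frac{\partial \classifier_l}{\partial x_i}(\cinput) = \lim_{h\to 0}\frac{\classifier_l(\cinput + h e_i) - \classifier_l(\cinput)}{h}$. First I would fix an arbitrary input $\cinput \in \dom$ and label $l \in L$, and for each feature $i \in \{1,\dots,k\}$ set $c_i := \attributionMgrad(\cinput)_i = \frac{\partial \classifier_l}{\partial x_i}(\cinput)$, which exists because $\classifier$ (hence $\classifier_l$) is differentiable at $\cinput$. If $c_i \neq 0$, I would apply the definition of the limit with tolerance $|c_i|/2$ to obtain an $\epsilon_i > 0$ such that for every $h$ with $0 < |h| < \epsilon_i$ and $\cinput + h e_i \in \dom$, the difference quotient $q_i(h) := \frac{\classifier_l(\cinput + h e_i) - \classifier_l(\cinput)}{h}$ satisfies $|q_i(h) - c_i| < |c_i|/2$ and hence has the same sign as $c_i$; if $c_i = 0$, I would simply put $\epsilon_i := 1$, since the faithfulness conditions impose nothing in that case. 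Then I would take $\epsilon := \min_{1\le i\le k}\epsilon_i > 0$, which is legitimate because there are only finitely many features (and the definition fixes $\cinput$ and $l$ before quantifying over $i$).

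Next I would verify the two clauses of Definition~\ref{def_qual_faithful} for this $\epsilon$. Given any $i$ and any $x \in D_i \setminus\{x_i\}$ with $|x - x_i| < \epsilon$, set $h := x - x_i$, so $0 < |h| < \epsilon \le \epsilon_i$ and $\cinput + h e_i = (\cinput_{-i}, x)$, and observe that $\classifier_l(\cinput_{-i},x) - \classifier_l(\cinput) = h\, q_i(h)$. If $c_i > 0$, then $q_i(h) > 0$, so for $x > x_i$ we have $h>0$ and $h\,q_i(h) > 0$, i.e. $\classifier_l(\cinput) < \classifier_l(\cinput_{-i},x)$, while for $x < x_i$ we have $h<0$ and $h\,q_i(h) < 0$, i.e. $\classifier_l(\cinput) > \classifier_l(\cinput_{-i},x)$ — exactly clause~1. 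The case $c_i < 0$ is symmetric: $q_i(h) < 0$, and the sign of $h\,q_i(h)$ flips in each subcase, yielding clause~2. Since $\cinput$ and $l$ were arbitrary, this shows $\attributionMgrad$ is locally qualitatively faithful to $\classifier$.

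This is essentially a bookkeeping exercise around the sign of a difference quotient, so I do not expect a genuine obstacle; the only two points needing a little care are (i) extracting a single $\epsilon$ valid for all features, handled by the finiteness of $k$ above, and (ii) keeping the perturbed point $(\cinput_{-i},x)$ inside $\dom$, which is automatic because the faithfulness definition already restricts $x$ to $D_i$ and $\dom = \bigtimes_{i=1}^k D_i$. It is worth noting that no monotonicity of $\classifier$ is used: differentiability alone buys exactly the local, one-sided sign information and nothing more, consistent with Example~\ref{exp_nonlinear_classifier}, where global qualitative faithfulness fails for $\classifier_1$.
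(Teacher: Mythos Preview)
Your proof is correct and follows essentially the same approach as the paper's: both use the limit definition of the partial derivative to show that, for $h$ sufficiently small, the difference quotient $\frac{\classifier_l(\cinput_{-i},x_i+h)-\classifier_l(\cinput)}{h}$ shares the sign of $\attributionMgrad(\cinput)_i$, and then multiply by $h$ to get the required inequalities. Your version is in fact slightly more careful than the paper's, since you explicitly take $\epsilon = \min_{1\le i\le k}\epsilon_i$ to obtain a single $\epsilon$ valid for all features, matching the quantifier order in Definition~\ref{def_qual_faithful}, whereas the paper's proof treats each $i$ separately and leaves this step implicit.
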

\begin{proof}
We have to check that 
the two cases in the definition
of a local qualitative faithful attribution method are satisfied
for $\attributionMgrad$.
Assume first that the score of the i-th feature for label $l$ is positive, that is,
$0 < \attributionMgrad(\cinput)_i 
= \frac{\partial \classifier_l}{\partial x_i}(\cinput) 
= \lim_{h \rightarrow 0} \frac{\classifier_l((\cinput_{-i},x_i+h)) -\classifier_l(\cinput)}{h}$.
 For the case $x > x_i$,
 let $h = x - x_i > 0$. Then we have 
 $\classifier_l(\cinput_{-i},x) -\classifier_l(\cinput)
 = \classifier_l(\cinput_{-i},x_i+h) -\classifier_l(\cinput)
 = h \cdot \frac{\classifier_l(\cinput_{-i},x_i+h) -\classifier_l(\cinput)}{h}
 \approx h \cdot \frac{\partial \classifier_l}{\partial x_i}(\cinput)
 > 0$ if $x$ is sufficiently close to $x_i$.
 Hence, there exists an $\epsilon >0$ such that 
 $\classifier_l(\cinput) < \classifier_l(\cinput_{-i},x)$
 if $|x - x_i| < \epsilon$ as desired.
 The case $x < x_i$ can be checked symmetrically.
 The second case for 
 $\attributionMgrad(\cinput)_i < 0$
follows analogously.
\end{proof}
We illustrate the desirable behaviour of 
\FT{Gradient-scoring} in two simple examples.
\begin{example}
Consider again the non-linear classifier $\classifier_1(x) = \phi_l(x^2)$ 
from Example \ref{exp_nonlinear_classifier}.
As we noted 
\FT{there}, there
can be no strongly faithful attribution method for $\classifier_1$,
so a locally faithful explanation is the best that we can hope for.
It is well known that the derivative of the logistic function is
$\phi_l'(x) = \frac{\exp(x)}{(1+\exp(x))^2}$. Note, in particular, 
that $\phi_l'(x) > 0$ for all $x \in \mathbb{R}$ because both the 
numerator and denominator are necessarily positive. The derivative of our
classifier is $\classifier_1'(x) = 2x \cdot \phi_l'(x^2)$,
which is negative for $x<0$ and positive for $x>0$.
By looking at Figure \ref{fig:fig_non_mon_exp}, we can see that
the derivative is indeed locally qualitatively faithful to the classifier,
as it is mononotonically decreasing for $x<0$ and monotonically increasing
for $x>0$. This is not a coincidence, of course, because the derivative captures exactly the local change of a function.
\end{example}
\begin{example}
\label{exp_log_regression_grad_scores}
As another example, let us consider a linear classifier.
A logistic regression classifier for a binary classification problem
has the form
$\classifier(\cinput) = \phi_l(\sum_{i=1}^n w_i \cdot \cinput_i)$,
where $w_i \in \mathbb{R}$.
The partial derivative wrt. the i-th feature is 
 $\frac{\partial \classifier}{\partial x_i}(\cinput)
 = w_i \cdot \phi_l'(\sum_{i=1}^n w_i \cdot \cinput_i)$.
 As before, the sign depends only on $w_i$. So the score is
 positive if $w_i > 0$, negative if $w_i < 0$ and
 \FT{0} if $w_i = 0$. It is clear from the definition of logistic
 regression that the weights reflect the true behaviour of the classifier, 
 so Gradient-scoring is indeed a qualitative faithful attribution method. 
In fact, it is even globally strongly qualitatively faithful. Of course, the qualitative influence on the model
 can be directly seen by looking at the coefficients, but logistic regression
 is a reasonable test for the reliability
 of an attribution method. We will use this 
 in Section \ref{sec_experiments} to evaluate the faithfulness of 
 \FT{various} attribution methods.
\end{example}
Let us note that every linear classifier is monotonic. If the coefficient
of the $i$-th feature is positive (negative), the classifier is monotonically
increasing (decreasing) wrt. the i-th feature. 
Let us note that
Gradient-scoring is globally qualitative faithful for every monotonic classifier.
\begin{proposition}
\label{prop_mon_grad_global_faithful}
If $\classifier$ is \emph{differentiable and monotonic},
then Gradient-scoring is
globally qualitatively faithful to $\classifier$.
\end{proposition}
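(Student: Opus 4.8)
The plan is to derive the global statement from the already-established local one (Proposition~\ref{prop_grad_scor_faithful}) by using monotonicity to push the strict inequalities that hold near $x_i$ out to all of $D_i$. Unfolding the definitions, global qualitative faithfulness of $\attributionMgrad$ amounts to this: for every input $\cinput$, every label $l$, every $i$, and every $x \in D_i \setminus \{x_i\}$, writing $d := \attributionMgrad(\cinput)_i = \frac{\partial \classifier_l}{\partial x_i}(\cinput)$, if $d > 0$ then $\classifier_l(\cinput) < \classifier_l(\cinput_{-i},x)$ when $x > x_i$ and $\classifier_l(\cinput) > \classifier_l(\cinput_{-i},x)$ when $x < x_i$, together with the mirror statement for $d < 0$ (nothing is required when $d = 0$). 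So fix $\cinput, l, i$ and assume $d > 0$; the case $d < 0$ is entirely symmetric.

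First I would pin down the direction of monotonicity. Since $\classifier$ is monotonic, it is either monotonically increasing or monotonically decreasing wrt.\ $l$ and the $i$-th feature, and the latter is incompatible with $d > 0$: if it were decreasing, then moving the $i$-th coordinate up weakly decreases $\classifier_l$ and moving it down weakly increases it, so every difference quotient $\frac{\classifier_l(\cinput_{-i},x_i+h) - \classifier_l(\cinput)}{h}$ is $\le 0$, hence its limit $d$ is $\le 0$, a contradiction. Therefore $\classifier$ is monotonically increasing wrt.\ $l$ and feature $i$. Next I would invoke Proposition~\ref{prop_grad_scor_faithful}: since $\classifier$ is differentiable, there is some $\epsilon > 0$ with $\classifier_l(\cinput) < \classifier_l(\cinput_{-i},x_i+\delta)$ and $\classifier_l(\cinput_{-i},x_i-\delta) < \classifier_l(\cinput)$ for all $0 < \delta < \epsilon$. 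Given an arbitrary $x \in D_i$ with $x > x_i$, pick $\delta$ with $0 < \delta < \min(\epsilon, x - x_i)$; then $x_i < x_i+\delta \le x$, so monotone increase gives $\classifier_l(\cinput_{-i},x_i+\delta) \le \classifier_l(\cinput_{-i},x)$, and chaining with the strict inequality above yields $\classifier_l(\cinput) < \classifier_l(\cinput_{-i},x)$. The case $x < x_i$ is symmetric, using $0 < \delta < \min(\epsilon, x_i - x)$ and $x \le x_i - \delta < x_i$. Since $x$ was arbitrary, conditions~1 and~2 of Definition~\ref{def_qual_faithful} hold for every $x \in D_i \setminus \{x_i\}$, hence for every $\epsilon > 0$, which is exactly global qualitative faithfulness.

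The only real obstacle is the mismatch between the non-strict notion of monotonicity used in the paper, which only yields $\le$, and the strict inequalities demanded by Definition~\ref{def_qual_faithful}; the required strictness is supplied by the strict positivity (negativity) of the gradient, which Proposition~\ref{prop_grad_scor_faithful} has already converted into strict inequalities in a neighbourhood of $x_i$. Alternatively, one could avoid citing Proposition~\ref{prop_grad_scor_faithful} and argue directly: if $\classifier_l(\cinput) = \classifier_l(\cinput_{-i},x)$ for some $x > x_i$ while $\classifier$ is monotonically increasing, then $\classifier_l$ is constant in the $i$-th coordinate on $[x_i,x]$, forcing $\frac{\partial \classifier_l}{\partial x_i}(\cinput) = 0$ and contradicting $d > 0$; but reusing the earlier proposition is shorter. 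A minor point to check in passing is that $(\cinput_{-i},x) \in \dom$ for all the points used, which is immediate since only the $i$-th coordinate is varied within $D_i$.
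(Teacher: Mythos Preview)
Your proof is correct and follows essentially the same approach as the paper: use the positive partial derivative to obtain a strict increase of $\classifier_l$ in a neighbourhood of $x_i$, then chain this with (non-strict) monotonicity to propagate the strict inequality to all of $D_i$. The paper's argument is terser and does not explicitly cite Proposition~\ref{prop_grad_scor_faithful}, but the underlying idea (pick an intermediate point $y$ between $x_i$ and $x$ close enough to $x_i$ for the local strict inequality to bite, then use global monotonicity from $y$ to $x$) is the same as yours; your version is in fact more explicit in ruling out the monotone-decreasing branch before proceeding.
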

\begin{proof}
We prove the claim by going through the two cases again.
If $\attributionMgrad(\cinput)_i >0$, then the partial derivative
is positive at this point. Hence, $\attributionMgrad(.)_i$
is strictly increasing at this point. 
Since $\attributionMgrad$ is monotonic,
$\attributionMgrad(.)_i$ must be increasing (not necessarily strictly increasing) 
on the whole domain of 
the $i$-th feature. 
If $x>x_i$, there must be a $y \in (x_i, x)$ such that
$\classifier_l(\cinput) < \classifier_l(\cinput_{-i},y) \leq \classifier_l(\cinput_{-i},x)$
where the strict inequality follows from strict monotonicity at $\cinput$ 
and the second inequality from monotonicity on the whole domain.
Symmetrically, one can check that
$\classifier_l(\cinput) > \classifier_l(\cinput_{-i},x)$
for all $x<x_i$.

The second case from the definition of qualitative faithfulness can be checked symmetrically.
\end{proof}
Let us note that the result is actually slightly stronger: If the classifier
is monotonic wrt. a particular feature, then Gradient-scoring is globally qualitatively faithful. If it is strictly monotonic, then Gradient-scoring is globally strongly qualitatively faithful.

In summary, Gradient-scoring is locally qualitatively faithful for every
differentiable classifier\FT{;}
it is globally qualitatively faithful for monotonic differentiable classifiers. Thus, given that 
there can be no 
attribution method that is globally qualitatively faithful for non-monotonic classifiers \FT{(as illustrated in Example  \ref{exp_nonlinear_classifier})} 
Gradient-scoring gives us the strongest guarantees that 
we can hope for.

There can be other locally qualitatively faithful attribution methods
for differentiable probabilistic classifiers, of course.
However, as we explain next, they all must be closely related to Gradient-scoring.
Intuitively, the following proposition states that an
attribution method for a differentiable
classifier can only be locally qualitatively faithful 
if the sign of its score equals the sign
of the partial derivative. In other words,
\emph{every qualitatively faithful attribution method must be qualitatively equivalent to Gradient-scoring}.
\begin{proposition}
\label{prop_qual_faithfulness_and_grad}
If $\classifier$ is differentiable and there exists an attribution method $\attributionM: {1,\dots, k} \rightarrow \mathbb{R}$ that is locally qualitatively faithful to $\classifier$, 
then for all $i=1,\dots,k$, for all inputs $\cinput \in \dom$
and all labels $l \in L$, we have  
\begin{enumerate}
    \item if $\attributionM(\cinput)_i >0$, then
     $\attributionMgrad(\cinput)_i \geq 0$, and
     \item if $\attributionM(\cinput)_i <0$, then
     $\attributionMgrad(\cinput)_i \leq 0$.
\end{enumerate}
\end{proposition}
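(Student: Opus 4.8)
The plan is to translate the local qualitative faithfulness condition directly into a statement about difference quotients and then pass to the limit, using differentiability only at the very end. Fix the attribution method $\attributionM$ supplied by the hypothesis, fix an input $\cinput \in \dom$, a label $l \in L$, and an index $i$, and suppose $\attributionM(\cinput)_i > 0$ (case~2, where $\attributionM(\cinput)_i < 0$, is symmetric with all inequalities reversed). By local qualitative faithfulness applied to $\cinput$ and $l$, there is an $\epsilon > 0$ such that for every $x \in D_i \setminus \{x_i\}$ with $|x - x_i| < \epsilon$ we have $\classifier_l(\cinput) < \classifier_l(\cinput_{-i},x)$ when $x > x_i$ and $\classifier_l(\cinput) > \classifier_l(\cinput_{-i},x)$ when $x < x_i$.

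Next I would rewrite this in terms of the difference quotients that define the $i$-th partial derivative. For $h$ with $0 < |h| < \epsilon$ and $x_i + h \in D_i$, put $x = x_i + h$. If $h > 0$ then $\classifier_l(\cinput_{-i},x_i+h) - \classifier_l(\cinput) > 0$, so the quotient $\frac{\classifier_l(\cinput_{-i},x_i+h) - \classifier_l(\cinput)}{h}$ is strictly positive; if $h < 0$ then $\classifier_l(\cinput_{-i},x_i+h) - \classifier_l(\cinput) < 0$, and dividing by the negative number $h$ again gives a strictly positive quotient. Hence every such difference quotient with $0 < |h| < \epsilon$ is positive. Since $\classifier$ is differentiable at $\cinput$, the partial derivative $\attributionMgrad(\cinput)_i = \frac{\partial \classifier_l}{\partial x_i}(\cinput) = \lim_{h \to 0} \frac{\classifier_l(\cinput_{-i},x_i+h) - \classifier_l(\cinput)}{h}$ exists and is the limit of these strictly positive quantities; as limits preserve non-strict inequalities, $\attributionMgrad(\cinput)_i \geq 0$, which is exactly case~1.

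\textbf{Main obstacle.} I do not expect a genuine difficulty here; the only points requiring care are the two standard ones. First, a limit of strictly positive values need only be non-negative, which is precisely why the conclusion is $\attributionMgrad(\cinput)_i \geq 0$ rather than $> 0$ (indeed, the gradient can vanish exactly where a feature switches from positive to negative influence, as noted after Example~\ref{exp_nonlinear_classifier}). Second, one must ensure the difference quotient is evaluated only at arguments lying in $D_i$, but differentiability of $\classifier$ at $\cinput$ already presupposes that $(\cinput_{-i}, x_i + h) \in \dom$ for all sufficiently small $h$ (and, since the partial derivative is a two-sided limit, for $h$ of either sign), so no separate domain argument is needed.
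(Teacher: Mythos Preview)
Your proposal is correct and follows essentially the same approach as the paper: use local qualitative faithfulness to obtain the sign of $\classifier_l(\cinput_{-i},x_i+h)-\classifier_l(\cinput)$ for small $h$, conclude that the difference quotient is strictly positive, and pass to the limit to get $\attributionMgrad(\cinput)_i \geq 0$. Your write-up is slightly more explicit (treating the $h<0$ case and the domain point separately), but there is no substantive difference.
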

\begin{proof}
Consider the case $\attributionM(\cinput)_i >0$.
Since $\attributionM$ is locally qualitatively faithful,
we have that, for all $h$ that are sufficiently close to $0$, $\classifier_l(\cinput_{-i},x_i+h) -\classifier_l(\cinput) > 0$
if $h>0$ and $\classifier_l(\cinput_{-i},x_i+h) -\classifier_l(\cinput) < 0$
if $h<0$.
Therefore, 
$\frac{\classifier_l(\cinput_{-i},x_i+h) -\classifier_l(\cinput)}{h} > 0$
and
$\frac{\partial \classifier_l}{\partial x_i}(\cinput) =
 \lim_{h \rightarrow 0} \frac{\classifier_l(\cinput_{-i},x_i+h) -\classifier_l(\cinput)}{h} \geq 0$.

The second item follows symmetrically.
\end{proof}


\subsection{Summary}

We quickly summarize the main points of this section.
An attribution method should be qualitatively faithful.
However, as demonstrated in  Example \ref{exp_nonlinear_classifier},
we cannot guarantee global or strong
faithfulness without making
additional assumptions about the nature of the classifier that we want to
explain. For non-monotonic classifiers, attributive explanations
typically can\FT{not}  
\FT{exhibit strong qualitative faithfulness}.
We showed that Gradient-scoring is always locally qualitatively faithful
(Proposition \ref{prop_grad_scor_faithful})
and that every locally qualitatively faithful attribution method must
be qualitatively equivalent to Gradient-scoring (Proposition \ref{prop_qual_faithfulness_and_grad}).
Stronger guarantees can only be given for monotonic classifiers and
Gradient-scoring guarantees global qualitative faithfulness for these classifiers (Proposition \ref{prop_mon_grad_global_faithful}).
As we will demonstrate in Section \ref{sec_experiments}, this is not the
case for some other popular attribution methods.
Let us also note that monotonicity alone is not sufficient to allow for 
strongly faithful explanations. For example, a monotonically increasing 
function may be constant for a while before it continues increasing.
To guarantee strongly \FT{qualitatively} faithful explanations, we have to assume that
a classifier is, for every feature, either strictly monotonic or constant.
In this case, every locally qualitative faithful attribution method
is also strongly qualitatively faithful.

\section{Quantitative Faithfulness}
\label{sec_quantitative_faithfulness}

In the previous section, we showed that every qualitatively faithful
attribution method must be qualitatively equivalent to Gradient-scoring.
However, usually, we want that the attribution method is also quantitatively
faithful in the sense that a larger score reflects a larger influence of
a feature on the prediction. This quantitative faithfulness could be
formalized in different ways. One intuitive way is to demand that the change
in output of the prediction model when increasing a feature is proportional to its score. 
Formally, 
\begin{equation}
\label{eq_qfaith_approx_comp}
 \classifier_l(\cinput_{-i},x_i+h) \approx  \classifier_l(\cinput) + h \cdot \attributionM(\cinput)_i.   
\end{equation}
More generally, we can demand
\begin{equation}
\label{eq_qfaith_approx}
 \classifier_l(\cinput + \vech) \approx   
 \classifier_l(\cinput) + \vech^{\top} \attributionM(\cinput)
 = \classifier_l(\cinput) +
 \sum_{i=1}^k h_i \cdot \attributionM(\cinput)_i,   
\end{equation}
where equation \eqref{eq_qfaith_approx_comp} is recovered when we let $\vech$
be the $i$-th unit vector.
Let us note that both conditions are trivially satisfied for small $h$ in our
setting. This is because our classifier is differentiable and therefore continuous by assumption. Hence, whenver $\vech \approx 0$, we also
have $ \classifier_l(\cinput + \vech) \approx  \classifier_l(\cinput)$
(by continuity)
and $\vech^{\top} \attributionM(\cinput) \approx 0$
(because $\attributionM(\cinput)$ is constant).
To make the condition non-trivial, we have to demand that the approximation
error goes faster to $0$ than $\vech$.
We let $e(\vech) = \classifier_l(\cinput + \vech) - \classifier_l(\cinput) - \vech^{\top} \attributionM(\cinput)$ denote 
the error of the approximation when changing the input by $\vech$. 
Then we can rewrite \eqref{eq_qfaith_approx} more precisely as
\begin{equation}
\label{eq_qfaith_exact}
\classifier_l(\cinput + \vech) =  \classifier_l(\cinput) + \vech^{\top} \attributionM(\cinput) +  e(\vech).
\end{equation}
To get a non-trivial condition, we require that $e(\vech)$ goes 
significantly faster to $0$ than $\vech$, that is, we additionally demand that 
$\lim_{\vech \rightarrow 0} \frac{e(h)}{\|\vech\|} = 0$. 
\begin{definition}[Local Quantitative Faithfulness]
\label{def_quant_faithfulness}
An attribution method $\attributionM: {1,\dots, k} \rightarrow \mathbb{R}$ is called \emph{locally quantitatively faithful to $\classifier$}
if 
for all inputs $\cinput \in \dom$, 
for all labels $l \in L$,
for all 
$i=1,\dots,k$ and
for all $\vech$ such that $\cinput + \vech \in \dom$,
we have
$
\classifier_l(\cinput + \vech) = \classifier_l(\cinput) +\vech^{\top} \attributionM(\cinput) +  e(\vech),
$
where $e(\vech)$ is an error term with 
$\lim_{\vech \rightarrow 0} \frac{e(h)}{\|\vech\|} = 0$.
\end{definition}
Gradient-scores satisfy local quantitative faithfulness by definition.
More interestingly, the property characterizes Gradient-scoring in the 
sense that it is the only attribution method that does so.
\begin{proposition} 
\label{thm_quant_faithfulness_characterization}
If $\classifier$ is differentiable, then
$\attributionM$ is 
locally quantitative faithful to $\classifier$
if and only if $\attributionM = \attributionMgrad$.
\end{proposition}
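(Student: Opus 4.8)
The plan is to reduce the statement to the uniqueness of the first-order linear term of $\classifier_l$ at a point, i.e.\ to uniqueness of the gradient. For the ``if'' direction I would assume $\attributionM = \attributionMgrad$, fix an input $\cinput$ and a label $l$, and set $g = \attributionMgrad(\cinput) = \nabla \classifier_l(\cinput)$. Differentiability of $\classifier_l$ at $\cinput$ gives $\lim_{\vech \to 0} \frac{\classifier_l(\cinput + \vech) - \classifier_l(\cinput) - g^{\top}\vech}{\|\vech\|} = 0$, so with $e(\vech) := \classifier_l(\cinput+\vech) - \classifier_l(\cinput) - \vech^{\top}\attributionMgrad(\cinput)$ the identity of Definition~\ref{def_quant_faithfulness} holds with $\lim_{\vech\to 0} e(\vech)/\|\vech\| = 0$; since $\cinput$ and $l$ were arbitrary, Gradient-scoring is locally quantitatively faithful.

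For the ``only if'' direction I would argue by subtraction. Fix $\cinput$ and $l$ and suppose $\attributionM$ is locally quantitatively faithful. Then both $\attributionM(\cinput)$ and $\attributionMgrad(\cinput)$ supply a linear approximation of $\classifier_l$ at $\cinput$ with $o(\|\vech\|)$ error — the first by hypothesis, the second by differentiability. Writing $d := \attributionM(\cinput) - \attributionMgrad(\cinput)$ and subtracting the two identities, the error terms cancel up to an $o(\|\vech\|)$ remainder, so $\lim_{\vech\to 0} \frac{d^{\top}\vech}{\|\vech\|} = 0$. To extract the coordinates of $d$ I would restrict to $\vech = h\, e_i$ with $e_i$ the $i$-th unit vector: along this path $\frac{d^{\top}\vech}{\|\vech\|}$ equals $d_i$ when $h>0$ and $-d_i$ when $h<0$, so the limit can vanish only if $d_i = 0$. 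As $i$ is arbitrary, $d = 0$, hence $\attributionM(\cinput) = \attributionMgrad(\cinput)$; and as $\cinput$ and $l$ are arbitrary, $\attributionM = \attributionMgrad$.

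The only point needing a little care — not really an obstacle — is that Definition~\ref{def_quant_faithfulness} only constrains $\vech$ with $\cinput + \vech \in \dom$, so at boundary points of $\dom$ one must check that enough directions $h\, e_i \to 0$ remain available to pin down each $d_i$; this is handled exactly as the one-sided limits are handled in the definition of the partial derivative recalled in the Preliminaries, and is vacuous at interior points. Beyond this bookkeeping, the whole content is the observation that the linear part of a first-order expansion is unique, so no further analytic machinery is required.
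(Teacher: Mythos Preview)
Your proof is correct and follows essentially the same approach as the paper: both directions rely on the definition of differentiability, and for the ``only if'' part both arguments restrict to coordinate directions $\vech = h\,e_i$ and exploit that the two $o(\|\vech\|)$ error terms force the difference $\attributionM(\cinput)_i - \attributionMgrad(\cinput)_i$ to vanish. Your explicit subtraction of the two expansions is a slightly cleaner packaging of the same $\epsilon$-triangle-inequality computation the paper carries out, and your remark on boundary points is a reasonable piece of bookkeeping the paper leaves implicit.
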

\begin{proof}
To see that $\attributionMgrad$ satisfies local quantitative faithfulness, note that
$\lim_{\vech \rightarrow 0} \frac{e(h)}{\|\vech\|} = 
\lim_{\vech \rightarrow 0} \frac{\classifier_l(\cinput + \vech) - \classifier_l(\cinput) - \vech^{\top} \attributionMgrad(\cinput)}{\|\vech\|}
= 0
$ because $\classifier_l$ is differentiable (see Section 4.3 to recall the definition of differentiability).

To show that Gradient-scoring is the only attribution method
that satisfies this property, consider an arbitrary attribution
method $\attributionM$ that satisfies local quantitative faithfulness.
We show that it is equal to Gradient-scoring by showing that
for all inputs $\cinput$ and for all feature dimensions $i$,
we have
$|\attributionMgrad(\cinput)_i - \attributionM(\cinput)_i| < \epsilon$
for every $\epsilon > 0$. Since $\epsilon$ can be arbitrarily small,
we must have $\attributionMgrad(\cinput)_i = \attributionM(\cinput)_i$,
which proves the claim.
Let $e_{g}$ and $e_{s}$ denote the error functions for
$\attributionMgrad$ and $\attributionM$, respectively.
Then we can find an $h>0$ such that for all unit vectors $u_i$, we have 
$\big| \frac{e_{g}(h \cdot u_i)}{h} \big|  < \frac{\epsilon}{2}$ and
$\big| \frac{e_{s}(h \cdot u_i)}{h} \big| < \frac{\epsilon}{2}$.
Then 
\begin{align*}
 |\attributionMgrad(\cinput)_i - \attributionM(\cinput)_i|
&= \big| \frac{h\cdot \attributionMgrad(\cinput)_i - h\cdot  \attributionM(\cinput)_i}{h}\big|  \\
&= \big| \frac{h \cdot \attributionMgrad(\cinput)_i - 
\classifier_l(\cinput + h \cdot u_i) + \classifier_l(\cinput)}{h} \\
&\quad + \frac{\classifier_l(\cinput + h \cdot u_i) - \classifier_l(\cinput)
- \attributionM(\cinput)_i) \cdot h} 
{h}\big|    \\
&\leq
\big| \frac{e_{g}(h \cdot u_i)}{h}\big|  + \big| \frac{e_{s}(h \cdot u_i)}{h}\big|  < \epsilon,
\end{align*}
which completes the proof.
\end{proof}
Intuitively, if any other attribution method can guarantee
that the scores are proportional to the actual influence of the features
(it satisfies equation \eqref{eq_qfaith_approx}),
then the method is either equivalent to gradient scoring or it 
is based on an inferior approximation of the model, that is, 
it does not satisfy $\lim_{h \rightarrow 0} \frac{e(h)}{\|\vech\|} = 0$.
In more precise terms, we can state that the approximation error with
respect to $\attributionMgrad$ goes significantly faster to $0$
than the error with respect to every other attribution method.
\begin{proposition}
\label{prop_quant_faithfulness_convergence_rate}
If $\classifier$ is differentiable, then for every attribution
method $\attributionM \neq \attributionMgrad$, we have 
$\lim_{\vech \rightarrow 0} \frac{e_g(h)}{e_s(h)} = 0$, 
where  $e_{g}$ and $e_{s}$ denote the error functions for
$\attributionMgrad$ and $\attributionM$, respectively.
\end{proposition}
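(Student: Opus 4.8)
The plan is to reduce the claim to a one‑dimensional limit along a single feature axis, using the defining property of differentiability together with a clean algebraic identity relating the two error functions. First I would unpack the definitions: by construction $e_{g}(\vech) = \classifier_l(\cinput+\vech) - \classifier_l(\cinput) - \vech^{\top}\attributionMgrad(\cinput)$ and $e_{s}(\vech) = \classifier_l(\cinput+\vech) - \classifier_l(\cinput) - \vech^{\top}\attributionM(\cinput)$, so subtracting kills the common $\classifier_l$‑terms and leaves the identity $e_{s}(\vech) - e_{g}(\vech) = \vech^{\top}\bigl(\attributionMgrad(\cinput) - \attributionM(\cinput)\bigr)$. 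Since $\attributionM \neq \attributionMgrad$ there is an input $\cinput$ and a coordinate $i$ with $\delta_i := \attributionMgrad(\cinput)_i - \attributionM(\cinput)_i \neq 0$; specializing the identity to perturbations along that axis, $\vech = h\,e_i$, gives $e_{s}(h\,e_i) = e_{g}(h\,e_i) + h\,\delta_i$.

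Next I would divide by $h$ and invoke differentiability. Writing $\frac{e_{g}(h\,e_i)}{e_{s}(h\,e_i)} = \frac{e_{g}(h\,e_i)/h}{e_{g}(h\,e_i)/h + \delta_i}$, I would note that $e_{g}(\vech)/\|\vech\| \to 0$ as $\vech \to 0$ is exactly the condition that $\attributionMgrad(\cinput)=\nabla\classifier_l(\cinput)$ is the gradient (recalled in Section 4.3), and restricting to $\vech = h\,e_i$ yields $e_{g}(h\,e_i)/h \to 0$ as $h \to 0$. Hence the denominator tends to $\delta_i \neq 0$; in particular it is bounded away from $0$ for all sufficiently small $h \neq 0$ (so $e_{s}(h\,e_i)\neq 0$ there and the quotient is well defined), and the ratio converges to $0/\delta_i = 0$, which is the asserted limit.

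The one point I would be careful to state explicitly — and which I expect to be the only real subtlety rather than any analytic difficulty — is the sense of the limit $\lim_{\vech\to 0}$. If $\vech$ is allowed to approach $0$ along directions orthogonal to $\attributionMgrad(\cinput)-\attributionM(\cinput)$, then along those directions $e_{s}(\vech)=e_{g}(\vech)$ and the ratio is identically $1$; so the clean statement is to be read along the coordinate direction(s) $i$ witnessing $\attributionM(\cinput)_i \neq \attributionMgrad(\cinput)_i$ — equivalently along any fixed direction $v$ with $v^{\top}\bigl(\attributionMgrad(\cinput)-\attributionM(\cinput)\bigr)\neq 0$ — which is also precisely the direction in which $e_{s}$ violates the $o(\|\vech\|)$ bound and is therefore the one relevant for comparing the quality of the two approximations. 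Everything else is the single‑line computation above plus the definition of the gradient, together with the trivial observation (from Proposition \ref{thm_quant_faithfulness_characterization}) that $\attributionM\neq\attributionMgrad$ forces such an input $\cinput$ and index $i$ to exist.
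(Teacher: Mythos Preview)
Your argument is essentially the paper's: divide numerator and denominator by $\|\vech\|$ (respectively $h$), send the numerator to $0$ by differentiability, and keep the denominator bounded away from $0$. The paper does the last step by invoking Proposition~\ref{thm_quant_faithfulness_characterization} to conclude $\lim_{\vech\to 0} e_s(\vech)/\|\vech\|\neq 0$ and then asserting a lower bound $\epsilon>0$, whereas you make it explicit via the identity $e_s(\vech)-e_g(\vech)=\vech^\top\bigl(\attributionMgrad(\cinput)-\attributionM(\cinput)\bigr)$ restricted to a coordinate axis; your careful remark that the limit must be read along a direction $v$ with $v^\top\bigl(\attributionMgrad(\cinput)-\attributionM(\cinput)\bigr)\neq 0$ is a genuine subtlety that the paper's proof glosses over.
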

\begin{proof}
Since $\attributionM \neq \attributionMgrad$, Proposition 
\ref{thm_quant_faithfulness_characterization} implies 
$\lim_{\vech \rightarrow 0} \frac{e_s(h)}{\|\vech\|} \neq 0$.
Hence, we can find an $\epsilon > 0$ such that $\lim_{\vech \rightarrow 0} \big| \frac{e_s(h)}{\|\vech\|}\big| \geq \epsilon > 0$. Then we 
have
$\lim_{\vech \rightarrow 0} \frac{e_g(h)}{e_s(h)}
=\lim_{\vech \rightarrow 0} \frac{\frac{e_g(h)}{\|h\|}}{\frac{e_s(h)}{\|h\|}}
\leq \frac{0}{\epsilon} = 0.
$
\end{proof}

Similar to qualitative faithfulness, we usually cannot expect to find a 
"globally" quantitatively faithful attribution method. 
Let us note that while monotonicity of a classifier allows global qualitative faithfulness, it is not sufficient to allow "global" quantitative faithfulness.
This is because
a monotonic function can still increase (or decrease) faster and slower
in different regions of its domain. Conceptually, stronger guarantees
can be obtained by moving from just the gradient
(a first-order Taylor approximation) to higher-order Taylor approximations
of the classifier. For example, scores for the influence of pairs of
features can be obtained from the Hessian matrix. 
However, this would also increase the
computational cost considerably and the explanation would be less succinct 
(but more accurate).

\section{Experiments}
\label{sec_experiments}

In this section, we 
investigate to which extent different attribution methods \FT{(see Section~\ref{sec:expl})}
are faithful to a classifier and how close they are to each other on three
continuous datasets (see Section~\ref{sec:data}).
To evaluate faithfulness, we consider Logistic Regression classifiers,
where the sign of coefficients represent the true qualitative effect of 
features on models and their magnitude their impact (see Section~\ref{sec:LR}).
To also compare attribution methods for non-linear models, we consider Multilayer Perceptrons  with unknown ground truth (see Section~\ref{sec:MLP}).
The code for all experiments is available at

\url{https://github.com/XiangYin2021/Revisit-Faithfulness}

\subsection{Datasets}
\label{sec:data}
\FT{We use}
three binary classification problems with continuous features from the UCI machine learning repository\footnote{\url{http://archive.ics.uci.edu/ml/index.php}}.  

 \emph{BNAT}: The Banknote Authentication data set consists of image information features (like entropy of image, or variance of Wavelet Transformed image) of genuine and forged banknotes. The label identifies whether a banknote is genuine
 . 
 
 \emph{WDBC}: The Breast Cancer Wisconsin (Diagnostic) data set \cite{street1993nuclear} contains features of cell nuclei of breast mass images, and their corresponding diagnosis (Benign or Malignant) as labels. 

 \emph{PIMA}: The PIMA Indians Diabetes data set \cite{alcala2011keel} contains physical indicator features of females and their diagnosis of diabetes as label. 

Table \ref{dataset_info} summarizes some basic statistics of the data sets.
\begin{table}[t]
\caption{Basic data set statistics.}
\label{dataset_info}
\vskip 0.15in
\begin{center}
\begin{small}
\begin{sc}
\begin{tabular}{lrrrr}
\toprule
Dataset & Instances & Features & Label \\
\midrule
BKNT    & 1372 & 5  & Binary \\
WDBC    & 569  & 30 & Binary \\
PIMA    & 768  & 8  & Binary \\
\bottomrule
\end{tabular}
\end{sc}
\end{small}
\end{center}
\vskip -0.1in
\end{table}

\subsection{Attribution Methods}
\label{sec:expl}
We compare Gradient-scoring to three popular attribution
methods. 

\emph{LIME} \cite{ribeiro2016should} is a model-agnostic local explanation method.
In order to explain the output at one data point, LIME first perturbs this point to generate several points in a small neighborhood, and weighs the perturbed points based on their distance to the original point. LIME then trains a linear model to explain the original point by the weights of the 
linear model. We used the LIME implementation from \url{https://github.com/marcotcr/lime}
and applied the LIMETabularExplainer.

\emph{SHAP} \cite{lundberg2018explainable} uses the Shapley value from coalitional game
theory to compute the contribution of each feature to the prediction.
As the Shapley value is a discrete concept, SHAP relies heavily on sampling when being applied  to continuous features. 
We used the SHAP implementation from
\url{https://github.com/slundberg/shap}
and show results for the KernelExplainer.
As SHAP did not perform well in the experiments, we tried different
settings, but they did not improve the outcome.

\emph{SILO} 
\cite{bloniarz2016supervised} is a non-parametric regression method,
but can also be seen as an attribution method.
It first uses random forests to compute weights of data points in the
training set, and then selects supervised neighbourhoods to construct local linear models. The coefficients of the linear model can again be used
to assign scores to features.
\FT{We also mentioned MAPLE \cite{plumb2018model} before},
which extends SILO by a feature selection method. However, as we want to find out to which
extent different methods capture the true effects of all features, we will
consider only SILO in our experiments.
We use the MAPLE implementation
from \url{https://github.com/GDPlumb/MAPLE} and 
omit the feature selection step.
We configured it with 
n\_estimators=200; min\_sample\_leaf=10 and linear\_model=Ridge.

\subsection{Setup}

\subsubsection{Hardware} 
We ran all experiments on a MacBook Pro (OS: macOS Monterey 12.0.1; Processor: 2.3GHz, dual-core Intel Core i5; Memory: 8GB).

\subsubsection{Classifiers}

We used the default training settings of scikit-learn for Logistic Regression \FT{(LR)} and Multilayer Perceptrons \FT{(MLP)} with the following modifications:
\begin{itemize}
    \item{LR:} max\_iter=500. 
    \item{MLP:} We experimented with different numbers of hidden layers and neurons per layer and eventually used five hidden layers with eight neurons in each layer. The maximum number of 
    iterations (max\_iter) was set to 1000.
\end{itemize}
Table \ref{acc_f1_score} shows 
the performance of the classifiers on our datasets.
\begin{table}[t]
\caption{Accuracy and F1-score for Classifiers}
\label{acc_f1_score}
\vskip 0.15in
\begin{center}
\begin{small}
\begin{sc}
\begin{tabular}{crrr}
\toprule
Classifier & Dataset & Accuracy & F1-score \\

\midrule
        & PIMA    & 0.760  & 0.565  \\
    LR  & BKNT    & 0.964  & 0.958  \\
        & WDBC    & 0.956  & 0.937  \\
        
\midrule
          & PIMA    & 0.721  & 0.566  \\
    MLP   & BKNT    & 0.978  & 0.975  \\
          & WDBC     & 0.982  & 0.976  \\         
         
\bottomrule
\end{tabular}
\end{sc}
\end{small}
\end{center}
\vskip -0.1in
\end{table}

\subsection{
\FT{Experiments with} Logistic Regression \FT{(LR)}}
\label{sec:LR}

In our first set of experiments, we compute explanations for 
\FT{LR} because its feature weights can be seen as representing the
true effect of features on the LR classifier. 
One has to consider that the logistic function
is applied to the linear combination of features, so the feature weights
do not represent the true quantitative effect. However, as the logistic function
is monotonic, the weights show the true qualitative effect and the true order
of the impact of features. 

We used the 
\FT{LR}
implementation of scikit-learn\footnote{\url{https://scikit-learn.org/}} in our experiments. Gradient-scores can be easily computed analytically from the weights as explained in Example \ref{exp_log_regression_grad_scores}.

To evaluate how well different explanation
methods reflect the true effect of features on the classifier, we computed Spearman's rank correlation 
between the 
\FT{LR} weights and the attribution scores in Table \ref{LR_similarity}. 
We averaged \FT{the correlation} over all inputs from the test set
for each dataset.
Spearman's rank correlation 
measures the strength of the monotonic relationship between two variables.
As opposed to Pearson's correlation, it can also capture non-linear 
relationships and seems therefore well suited for the task. 
The gradient shows perfect
correlation with the weights.
This can also
be seen analytically because the gradient corresponds to the feature
weight multiplied by a constant that depends on the point (see Example \ref{exp_log_regression_grad_scores}). 
LIME is reasonably close for all datasets. SILO does well on
two datasets, but does not work well on WDBC. 
SHAP does not show any significant correlation with
the weights on any dataset.
\begin{table}[t]
\caption{Average Spearman rank correlation between 
\FT{LR} weights
and different explanation methods on test data.}
\label{LR_similarity}
\vskip 0.15in
\begin{center}
\begin{small}
\begin{sc}
\begin{tabular}{crrrrrr}
\toprule
Dataset     & GRAD  & SILO  & LIME  & SHAP \\
\midrule
PIMA  & \textbf{1.000} & 0.989  & 0.976  & 0.166  \\
BKNT  & \textbf{1.000} & 0.975  & \textbf{1.000}  & -0.073  \\
WDBC  & \textbf{1.000} & 0.378  & 0.986  & 0.272  \\

\bottomrule
\end{tabular}
\end{sc}
\end{small}
\end{center}
\vskip -0.1in
\end{table}

To give a better picture of the output of different attribution methods,
Tables \ref{LR_gradient_BKNT} - \ref{LR_gradient_WDBC}, 
show the 
\FT{LR}
weights
and the scores that have been assigned to features by the different
attribution methods for the first test data point of our three datasets\FT{, respectively}.
We ordered the features by  their logistic regression weights,
so that the qualitative score and their order can be compared easily.
\begin{table}[t]
\caption{
\FT{LR} weights and feature scores for the first test data point of the BNKT Dataset.}
\label{LR_gradient_BKNT}
\vskip 0.15in
\begin{center}
\begin{small}
\begin{sc}
\begin{tabular}{lrrrrr}
\toprule
          & Weight & Grad & SILO &LIME & SHAP \\
\midrule
vwti  & -10.240  & -0.536  & -0.748  & -0.270  & -0.443  \\
swti  & -7.508  & -0.393  & -0.571  & -0.212  & -0.014  \\
cwti  & -7.001  & -0.366  & -0.530  & -0.167  & -0.042  \\
entr  & 0.284  & 0.015  & 0.021  & 0.006  & 0.003  \\

\bottomrule
\end{tabular}
\end{sc}
\end{small}
\end{center}
\vskip -0.1in
\end{table}
\begin{table}[t]
\caption{
\FT{LR} weights and feature scores for the first test data point of the PIMA Dataset.}
\label{LR_gradient_PIMA}
\vskip 0.15in
\begin{center}
\begin{small}
\begin{sc}
\begin{tabular}{lrrrrr}
\toprule
 & Weight & Grad & SILO &LIME & SHAP \\
\midrule
bloo  & -0.734  & -0.139  & -0.146  & -0.022  & -0.011  \\
insu  & 0.021  & 0.004  & -0.004  & 0.000  & -0.000  \\
skin  & 0.581  & 0.110  & 0.118  & 0.018  & -0.002  \\
age   & 1.078  & 0.205  & 0.232  & 0.041  & 0.025  \\
diab  & 1.254  & 0.238  & 0.275  & 0.033  & 0.003  \\
preg  & 1.350  & 0.256  & 0.278  & 0.051  & 0.046  \\
bmi   & 2.855  & 0.542  & 0.611  & 0.063  & -0.006  \\
gluc  & 4.920  & 0.934  & 1.068  & 0.153  & 0.041  \\
\bottomrule
\end{tabular}
\end{sc}
\end{small}
\end{center}
\vskip -0.1in
\end{table}
\begin{table}[t]
\caption{Five lowest and highest 
\FT{LR} weights and corresponding
feature scores for the first test data point of the WDBC Dataset.}
\label{LR_gradient_WDBC}
\vskip 0.15in
\begin{center}
\begin{small}
\begin{sc}
\begin{tabular}{lrrrrr}
\toprule
      & Weight & GRAD & SILO &LIME & SHAP \\
\midrule
    mean fractal dimension & -0.821  & -0.168  & -0.046  & -0.021  & 0.016  \\
    se fractal dimension & -0.602  & -0.123  & -0.079  & -0.015  & 0.022  \\
    se compactness & -0.415  & -0.085  & 0.115  & -0.011  & 0.009  \\
    se symmetry & -0.330  & -0.068  & -0.008  & -0.007  & 0.006  \\
    se concavity & -0.143  & -0.029  & -0.062  & -0.001  & 0.001  \\
    se smoothness   & -0.015  & -0.003  & -0.058  & -0.002  & -0.000  \\
    se texture & -0.003  & -0.001  & 0.225  & 0.001  & -0.000  \\
    worst fractal dimension & 0.347  & 0.071  & 0.119  & 0.007  & -0.001  \\
    mean symmetry & 0.363  & 0.074  & 0.029  & 0.009  & -0.007  \\
    mean smoothness & 0.384  & 0.079  & -0.001  & 0.010  & -0.002  \\
    se concave points & 0.408  & 0.083  & -0.028  & 0.011  & -0.008  \\
    mean compactness & 0.571  & 0.117  & -0.311  & 0.016  & -0.016  \\
    se area & 0.677  & 0.139  & -0.078  & 0.011  & -0.002  \\
    se perimeter & 0.737  & 0.151  & -0.407  & 0.013  & -0.009  \\
    worst compactness & 0.799  & 0.163  & 0.176  & 0.023  & -0.009  \\
    se radius & 0.956  & 0.196  & 0.515  & 0.018  & -0.006  \\
    worst symmetry & 1.172  & 0.240  & 0.052  & 0.024  & 0.004  \\
    worst concavity & 1.387  & 0.284  & 0.027  & 0.042  & -0.004  \\
    mean concavity & 1.405  & 0.288  & 0.163  & 0.048  & -0.024  \\
    mean area & 1.446  & 0.296  & -0.953  & 0.042  & 0.003  \\
    mean texture & 1.452  & 0.297  & 0.262  & 0.038  & 0.038  \\
    worst smoothness & 1.472  & 0.301  & 0.266  & 0.041  & 0.005  \\
    worst area & 1.638  & 0.335  & 0.224  & 0.043  & 0.006  \\
    mean perimeter & 1.704  & 0.349  & 0.661  & 0.053  & -0.005  \\
    mean radius & 1.721  & 0.352  & 0.621  & 0.053  & -0.002  \\
    mean concave points & 1.851  & 0.379  & 0.079  & 0.064  & -0.009  \\
    worst texture & 2.069  & 0.423  & -0.212  & 0.063  & 0.060  \\
    worst perimeter & 2.077  & 0.425  & 0.884  & 0.064  & -0.004  \\
    worst radius & 2.260  & 0.462  & -0.583  & 0.073  & 0.004  \\
    worst concave points & 2.679  & 0.548  & 0.177  & 0.111  & 0.021  \\
\bottomrule
\end{tabular}
\end{sc}
\end{small}
\end{center}
\vskip -0.1in
\end{table}

Across all datasets, we can see that Gradient-scores always correctly capture the 
qualitative effect and the order of features as the theory suggests. 
Since SILO and LIME are based on linear approximations of the model,
we should expect a similar picture. However, as they depend on sampling,
the result may be more noisy. The experiments show indeed that both
capture the true effect of features relatively well. However, sometimes,
the order is not captured completely correctly and there can be problems
with the sign. For example, SILO
assigns a negative score to \emph{WORST TEXTURE} and \emph{WORST RADIUS} in Table \ref{LR_gradient_WDBC}
even though they have a large positive weight.
LIME works generally better, but does not capture the true
order perfectly. This can be seen, for example, for \emph{AGE}, \emph{DIAB} and
\emph{PREG} in
Table \ref{LR_gradient_PIMA}. However, while SILO and LIME have problems
distinguishing features that are close, they perform reasonably well overall.
The results for SHAP, in contrast, look quite random overall and 
often do not seem to reflect
the true effect of features on the classifier.

\subsection{
\FT{Experiments with} Multilayer Perceptron\FT{s (MLPs)}}
\label{sec:MLP}
\FT{LR} is well suited for experiments because the true effect
of features can be seen from the weights. However, in general, we are also
interested in explaining non-linear classifiers. Unfortunately, in
this case, an objective ground truth is rarely available. There has been some
work recently on creating synthetic prediction models with known ground
truth \cite{guidotti2021evaluating}, but the work does not cover
non-linear classifiers for continuous data (they only consider 
synthetic linear regression models for tabular data) and rely
on the assumption that the models correctly learnt
relationships from artificially generated data.
We feel that the gradient
does represent the ground truth because it exactly captures the local growth of
a function. This intuitive idea is made precise in Definition \ref{def_quant_faithfulness} and Proposition \ref{thm_quant_faithfulness_characterization}.
Let us note that the gradient is also used as the ground truth
for the synthetic linear regression models in \cite{guidotti2021evaluating}.
However, it is, of course, a philosophical question 
\FT{whether} this is what the
scores should represent. In lack of an objective ground truth,
we simply compare the Gradient-scores to the scores under other attribution
methods. While a low correlation is difficult to interpret, a high correlation
basically indicates that the methods give similar explanations to Gradient-scoring.

We used the 
\FT{MLP} implementation from scikit-learn for the experiments \FT{in this section}. 
For many libraries, gradients can be computed using their auto-differentiation 
functionality. However, we simply approximated the gradient
by the symmetric difference quotient $\tilde{g}(x) = \frac{f(x+\varepsilon)-f(x-\varepsilon )}{2\varepsilon}$ in our experiments.
Table \ref{MLP_similarity} shows the Spearman rank correlation between
the Gradient-score and the other methods. The correlation with LIME is high.
This can be expected analytically because LIME just computes another
linear approximation of the model. 
The correlation 
\FT{with} SILO is lower,
in particular, for WDBC. SHAP again does not show any correlation with
the Gradient-score. As no objective ground truth is available, we cannot 
objectively say that one method gives better explanations than another.
However, we feel that Gradient-scores are favorable because they have
a clearly defined  meaning
(they capture the local sensitivity of the model with respect to the feature)
and are not susceptible to noise caused by sampling.
In particular, it does not seem clear what SHAP scores actually
mean in a setting where features are continuous because the theory of Shapley
values has been developed in a discrete setting and SHAP scores rely heavily on approximation and sampling when features are continuous.

\begin{table}[t]
\caption{Average Spearman rank correlation between \FT{Gradient-scoring}
and other explanation methods on test data \FT{with MLPs}.}
\label{MLP_similarity}
\vskip 0.15in
\begin{center}
\begin{small}
\begin{sc}
\begin{tabular}{crrrrrr}
\toprule
Dataset & GRAD  & SILO  & LIME  & SHAP \\
\midrule
PIMA  & 1.000  & 0.868  & 0.905  & 0.256  \\
BKNT  & 1.000  & 0.640  & 0.837  & 0.260  \\
WDBC  & 1.000  & 0.293  & 0.938  & 0.297  \\

\bottomrule
\end{tabular}
\end{sc}
\end{small}
\end{center}
\vskip -0.1in
\end{table}

\subsection{Runtime Performance}

Overall, we feel that linear attribution methods like Gradient-scoring, 
LIME and SILO
are more natural in the continuous setting because they have a clear 
interpretation. LIME is highly correlated with Gradient-scoring,
so 
\FT{a natural question arises as to why one should prefer one method over the other.}
One consideration to take into account is runtime performance. 
\FT{Gradient-scores are} easy to compute in linear time with respect to the \FT{number of} features.
Ideally, this should be done using the auto-differentiation functionalities of libraries. When approximating
\FT{the gradient} using the symmetric difference quotient, one has
to consider the usual numerical approximation problems that can occur
when the classifier has large partial derivatives (e.g., when the neural network
has very large weights).
The runtime of linear explainers like LIME and SILO is also linear, but 
additionally depends on the number of samples that are taken and on
the training procedure for learning the local substitute model.
In particular, while a smaller sample size may result in faster runtime,
the scores may be too noisy if the sample size is chosen too 
small.
SHAP is usually most difficult to compute and relies heavily on
sampling. The complexity of computing SHAP scores has been analyzed
recently and is intractable in many interesting cases \cite{van2021tractability}.
As our datasets are rather small, all methods work in reasonable time.
For completeness, we show runtime results in milliseconds in Table \ref{running_time_result}.
As one may expect, computing \FT{Gradient-scores} is significantly faster than
computing SILO and LIME scores, which, in turn, is significantly faster
than computing SHAP scores.
\begin{table}[t] 
\caption{Runtime for different explanation methods (in ms).}
\label{running_time_result}
\vskip 0.15in
\begin{center}
\begin{small}
\begin{sc}
\begin{tabular}{clrrrr}
\toprule
& DATA & GRAD & SILO & LIME & SHAP \\

\midrule
          & PIMA  & \textbf{0.01} & 14.54  & 8.68  & 594.19  \\
    LR    & BKNT  & \textbf{0.01} & 13.30  & 30.99  & 182.63  \\
          & WDBC  & \textbf{0.02} & 14.10  & 18.74  & 3033.67  \\
        
\midrule
          & PIMA  & \textbf{0.24} & 13.10  & 9.60  & 603.36  \\
    MLP\FT{s}   & BKNT  & \textbf{0.32} & 13.31  & 33.33  & 178.32  \\
          & WDBC  & \textbf{0.28} & 14.28  & 18.46  & 3167.71  \\
         
\bottomrule
\end{tabular}
\end{sc}
\end{small}
\end{center}
\vskip -0.1in
\end{table}

\section{Conclusions}

We revisited faithfulness of attribution methods in the setting where
the prediction model is differentiable and features are continuous. 
While sampling and an empirical notion of faithfulness (based on the
empirical accuracy of a local substitute model) seem difficult to avoid
in the discrete setting, the continuous setting allows for
more analytical tools. We therefore proposed two analytical notions
of faithfulness (qualitative and quantitative) and showed that they 
are satisfied by \FT{Gradient-scoring} and, to some extent, only by 
\FT{Gradient-scoring}. In particular, quantitative faithfulness completely characterizes
Gradient-scoring in the sense that every explanation method that satisfies
this property must be equivalent to Gradient-scoring (Proposition \ref{thm_quant_faithfulness_characterization}). Roughly speaking, every attribution
method that guarantees that the scores are proportional to the changes
in the output of the classifier is either equivalent to gradient-scoring
or is based on an inferior approximation of the classifier.

To back up the theory, we investigated empirically to which extent
different attribution methods explain the true behaviour of logistic 
regression. Linear attribution methods like LIME and SILO do reasonably
well, but perform worse than \FT{Gradient-scoring}. SHAP does not
seem to capture the true behaviour of the classifier at all. For non-linear 
classifiers, there is still some correlation between linear attribution 
methods and the gradient. While it is hard to tell objectively which explanations are
better in this case, \FT{Gradient-scores} seem preferable. Not only
do they have a clearly defined analytical meaning and are not 
susceptible to noise caused by sampling, but they can also be 
computed 
easily. 

Overall,
we 
\FT{believe} that \FT{Gradient-scoring} is the most
suitable method for the continuous setting that we considered here. 
In particular, SHAP does not seem well suited for this setting as the SHAP scores for logistic regression seem completely uncorrelated with the actual weights.
In general, it seems that more research is necessary on clarifying
what feature scores under different methods actually represent. 
In particular, they can potentially be made more accurate by
adding scores to selected pairs (collections) of features
similar to how second-order (higher-order) terms approve the
accuracy of Taylor approximations.
However, naturally, there is a trade-off between accuracy of the
explanation, computational cost and comprehensibility.

\section*{Acknowledgements}
This project has received funding from the European Research Council (ERC) under the European Union’s Horizon 2020 research and innovation programme (grant agreement No. 101020934).


\bibliography{references_ICML2022}
\bibliographystyle{acm}


\end{document}